\documentclass{article}

\usepackage{microtype}
\usepackage{graphicx}
\usepackage{subcaption}
\usepackage{lipsum}
\usepackage{booktabs} 
\usepackage{url}

\usepackage{hyperref}
\usepackage{booktabs}
\usepackage{multirow}
\usepackage{algorithm}
\usepackage{algorithmic}
\usepackage{caption}

\newcommand{\smallcell}[1]{{\fontsize{7.5}{9}\selectfont #1}}

\usepackage[preprint]{icml2026}

\usepackage{amsmath}
\usepackage{amssymb}
\usepackage{mathtools}
\usepackage{amsthm}

\usepackage[capitalize,noabbrev]{cleveref}

\theoremstyle{plain}
\newtheorem{theorem}{Theorem}[section]

\newtheorem{lemma}[theorem]{Lemma}
\newtheorem{corollary}[theorem]{Corollary}
\theoremstyle{definition}

\theoremstyle{remark}

\usepackage{xcolor}

\usepackage[textsize=tiny]{todonotes}

\icmltitlerunning{Zero Sum SVD: Balancing Loss Sensitivity for Low Rank LLM Compression}

\begin{document}

\twocolumn[

    \icmltitle{Zero Sum SVD: Balancing Loss Sensitivity for Low Rank LLM Compression}

  \icmlsetsymbol{equal}{*}

  \begin{icmlauthorlist}
    \icmlauthor{Ali Abbasi}{vandy}
    \icmlauthor{Chayne Thrash}{equal,vandy}
    \icmlauthor{Haoran Qin}{equal,vandy}
    \icmlauthor{Shansita Sharma}{vandy}
    \icmlauthor{Sepehr Seifi}{vandy}
    \icmlauthor{Soheil Kolouri}{vandy}
  \end{icmlauthorlist}

  \icmlaffiliation{vandy}{Department of Computer Science, Vanderbilt University, TN, USA}

  \icmlcorrespondingauthor{Ali Abbasi}{ali.abbasi@vanderbilt.edu}
  \icmlcorrespondingauthor{Soheil Kolouri}{soheil.kolouri@vanderbilt.edu}

  \icmlkeywords{Machine Learning, ICML}

  \vskip 0.3in
]



\printAffiliationsAndNotice{\icmlEqualContribution}

\begin{abstract}
Advances in large language models have driven strong performance across many tasks, but their memory and compute costs still hinder deployment. SVD-based compression reduces storage and can speed up inference via low-rank factors, yet performance depends on how rank is allocated under a global compression ratio. Prior methods often use homogeneous ranks for similarly sized matrices, despite large differences in loss sensitivity, or rely on expensive iterative pre-truncation optimization to determine per matrix ranks. We propose \textbf{Zero Sum SVD} (\textbf{ZS-SVD}), a post-training method that performs \emph{global} singular component selection using activation whitening and first-order calibration loss estimates in whitened coordinates. \textbf{ZS-SVD} prunes components across the whole model with a \textbf{zero sum} rule that keeps the cumulative predicted loss change near zero, automatically yielding heterogeneous ranks without solving a rank allocation optimization. Motivated by evidence that gradients near pretrained solutions exhibit low rank structure, we also introduce an optional lightweight correction that applies a \textbf{single} projected gradient update after truncation, followed by re-truncation. Extensive experiments across multiple LLM architectures show consistent gains across diverse benchmarks and compression ratios.
\ifdefined\ispreprint
\noindent Code is available at \url{https://github.com/mint-vu/Zero-Sum-SVD}
\fi
\end{abstract}

\section{Introduction}
\label{sec:intro}
\vspace{-2mm}

Large Language Models (LLMs) have demonstrated remarkable capabilities across natural language understanding and generation tasks~\citep{comanici2025gemini,llama4,singh2025openai}. Despite their success, the deployment of LLMs remains constrained by substantial computational and memory requirements, limiting their adoption on resource-constrained devices and latency-sensitive applications such as robotics, edge computing, and interactive systems~\citep{wan2023efficient,zhou2024survey}. This motivates the design of post-training compression methods that reduce model size without expensive retraining.

Quantization and knowledge distillation are often effective at reducing model cost while preserving performance. However, quantization methods~\citep{gptq,awq,qlora,liuspinquant} frequently depend on specific hardware features or kernel support to realize their speedups, which can limit portability across deployment environments. Knowledge distillation~\citep{kodistillm,hsieh2023distilling,guminillm,fang2026knowledge}, while powerful, typically requires training a separate student model via gradient-based optimization, adding substantial computational and engineering overhead compared to purely post-training alternatives. Structured pruning approaches~\citep{llmpruner,slicegpt,sparsegpt,guoslimllm} can also suffer meaningful accuracy drops even under moderate pruning ratios.

In contrast, Singular Value Decomposition (SVD) offers a compelling alternative that circumvents these limitations. SVD-based compression is \textit{hardware-agnostic}, requiring only standard linear algebra operations; \textit{flexible}, enabling compression to arbitrary target ratios; and compatible with other compression approaches, enabling combination with quantization and distillation. Moreover, SVD-compressed models can reduce runtime KV cache memory without additional accuracy loss~\citep{asvd}. 

Early SVD-based compression methods either (i) apply standard low-rank factorization to reconstruct the weight matrix itself \cite{jaderberg2014speeding,noach2020compressing}, or (ii) use importance-weighted reconstructions—e.g., Fisher-weighted low-rank factorization as in FWSVD~\citep{fwsvd}. In both cases, however, the objective is still local: it prioritizes matching $W$ (under a chosen metric) rather than directly optimizing task loss. This matrix-centric view can miss a key reality in LLMs: performance is shaped by the \emph{distribution of inputs} each layer actually sees. As a result, truncating directions that appear negligible under a matrix norm can still cause large, systematic shifts in typical pre-activations $WX$ on in-distribution tokens, which can compound across layers and worsen under higher compression rates.

Recent variations of SVD-based compression ~\citep{asvd,svdllm,dobisvd,ding2025dipsvd}, incorporate activation statistics by whitening and performing truncation in the corresponding coordinate system, which better aligns the approximation with the empirical distribution of $x$ from a small calibration set. However, the scope of their truncation objective is still largely local, focusing on reconstructing $Wx$ or closely related surrogates, rather than directly characterizing how truncation changes the end-to-end loss, including error propagation through subsequent layers and interactions across modules. DipSVD~\citep{ding2025dipsvd} takes a step toward loss awareness by introducing a dual-importance protection term based on per-matrix Fisher information. While promising, the resulting importance factor is heuristic and only indirectly related to the true loss change, making its impact harder to predict across models, tasks, and compression regimes.

We address these limitations with \textbf{Zero-Sum SVD (ZS-SVD)}, a post-training compression method that makes truncation decisions using both the local distortion induced in the whitened activation space and a global, loss-aware view derived from first-order calibration gradients. Our main contributions are: 1) We introduce a \emph{global} zero-sum selection strategy that prunes singular components across all layers while keeping the cumulative predicted loss drift near zero, yielding heterogeneous ranks automatically under a global budget without solving an explicit rank-allocation optimization. 2) Motivated by evidence that gradients near pretrained solutions exhibit low-rank structure, we propose an optional lightweight correction step that briefly deviates from the low-rank manifold via a projected gradient update on a small calibration set, and then re-truncates to return to the target rank while incurring minimal additional projection error. We validate ZS-SVD through extensive experiments on LLMs spanning multiple scales (e.g., 7B, 13B, 30B) across perplexity and zero-shot reasoning benchmarks, demonstrating consistent improvements over strong SVD baselines (SVD-LLM, Dobi-SVD) and structured pruning methods, along with meaningful inference speedups relative to prior SVD approaches.

\section{Related Work}
\label{sec:related}

\noindent\textbf{Compression of Large Language Models.} LLM performance has scaled with model size, but at the cost of heavy memory and compute that complicate deployment. This has driven post-training compression methods that avoid full retraining, most commonly pruning and quantization. Optimal Brain Compression (OBC) \citep{OBC_2022} performs greedy post-hoc pruning/quantization by minimizing activation reconstruction error on a small calibration set, but is impractical for LLMs because it requires repeated inverse-Hessian computations. SparseGPT \citep{sparsegpt} scales this idea to 100B+ models, while LLM-Pruner \citep{llmpruner} uses neuron connectivity for structured pruning and Wanda \citep{sun2024a} proposes a one-shot magnitude–activation criterion. On the quantization side, LLM.int8() \citep{llm_int8} uses mixed precision to handle activation outliers; SmoothQuant \citep{smooth_quant} mitigates outliers by rescaling weights using activation statistics; GPTQ/OPTQ \citep{gptq} adapts OBC with fixed row-wise ordering and block Hessian approximations; and QuIP \citep{quip, quip_2} further improves accuracy via incoherence processing. Despite strong compression, many pruning and quantization methods yield limited wall-clock speedups without specialized hardware and kernel support.

\noindent\textbf{SVD-based Compression of Large Language Models} Recently, Singular Value Decomposition (SVD) based compression approaches have gained popularity due to their simplicity and ability to realize inference speed improvements regardless of hardware. FWSVD \citep{fwsvd} initially applied SVD-based compression to LLMs. To reduce the error incurred by this truncation, parameter importance is computed via the Fisher information matrix. Neurons are then weighted by the sum of their weights' importance scores prior to computing the SVD. Since then, works have primarily focused on removing singular values such that activation reconstruction error is reduced. ASVD~\citep{asvd} rescales each weight matrix using a diagonal matrix of per-channel average activation magnitudes before SVD truncation. SVD-LLM \citep{svdllm} directly optimizes activation reconstruction by utilizing a "whitening" matrix computed as the Cholesky decomposition of the covariance of the input activations. In addition, they introduce a performance recovery step by fine-tuning the resulting low-rank matrices via a small LoRA \citep{hu2022lora,wang2024lora} residual. SVD-LLMv2 \citep{svd_llm_V2} extends this to support heterogeneous rank allocation per layer via utilizing an estimate of the loss incurred by truncation. Most recently, Dobi-SVD~\citep{dobisvd} uses backpropagation to optimize a layer-wise truncation level (rank) and computes the corresponding truncated approximation using incremental PCA (IPCA).

Our method is also SVD-based and activation-aware (in the spirit of SVD-LLM \citep{svdllm,svd_llm_V2}), but differs in how it globally allocates rank: we compute per-layer singular-value importances via the directional derivative of the calibration loss with respect to each singular value, inducing a cross-layer priority ordering that enables fast greedy, heterogeneous truncation without solving the expensive per-layer optimization used in methods such as Dobi-SVD \citep{dobisvd}. Finally, we augment truncation with a lightweight one-step correction followed by re-truncation, recovering loss while preserving the target low-rank structure.

\begin{figure*}[t!]
    \centering
    \includegraphics[width=\linewidth]{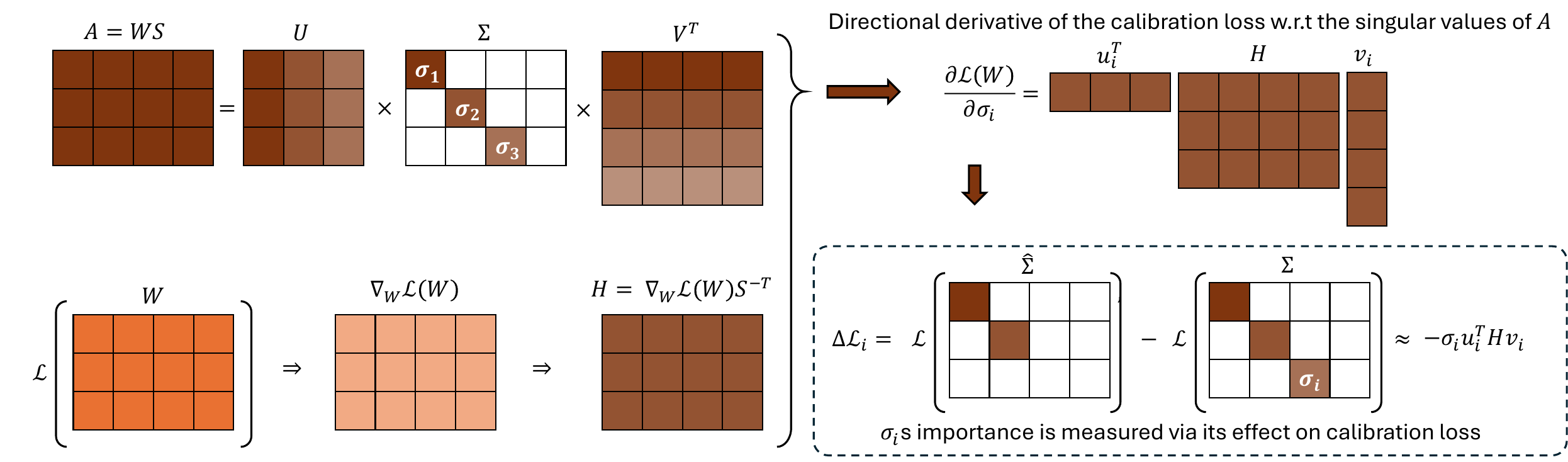}
    \caption{Zero-Sum SVD scores singular values of the whitened weight matrix using the calibration-loss directional derivative, then uses these scores to prioritize truncation across layers.}
    \label{fig:dir_deriv}
\end{figure*}

\section{Preliminaries}
\label{sec:prelim}

\subsection{Notation and compression objective}
Let $W \in \mathbb{R}^{m \times n}$ denote a weight matrix in a neural network, e.g., in a transformer block, and let
$X \in \mathbb{R}^{n \times T}$ denote the corresponding input activation matrix formed by stacking
$T$ $n$-dimensional tokens. We consider post-training compression that replaces $W$ by
$W'$ while minimizing the activation reconstruction error
\begin{equation}
\label{eq:frob_obj}
\min_{W'} \; \lVert W X - W' X \rVert_F,
\end{equation}
where $\|\cdot\|_F$ denotes the Frobenius norm. We write the singular value decomposition (SVD) of a matrix $A\in\mathbb{R}^{m\times n}$ as $A=U\Sigma V^\top$, where $U\in\mathbb{R}^{m\times m}$ and $V\in\mathbb{R}^{n\times n}$ are orthogonal matrices whose columns are the left and right singular vectors of $A$, and $\Sigma\in\mathbb{R}^{m\times n}$ is a (rectangular) diagonal matrix whose diagonal entries $\{\sigma_i\}_{i=1}^{r}$ are the singular values of $A$, ordered as $\sigma_1\geq\sigma_2\geq\cdots\geq 0$, and where $r=\min(m,n)$; for a target rank $k\leq r$, the rank-$k$ truncated SVD is given by $A_k=U_k\Sigma_k V_k^\top$, where $U_k\in\mathbb{R}^{m\times k}$ and $V_k\in\mathbb{R}^{n\times k}$ contain the top $k$ left and right singular vectors, respectively, and $\Sigma_k=\mathrm{diag}(\sigma_1,\ldots,\sigma_k)\in\mathbb{R}^{k\times k}$.


\subsection{Whitening for activation reconstruction}
\label{sec:prelim_whitening_optimal}
Define the activation second moment $C = X X^\top \in \mathbb{R}^{n \times n}$. Expanding
\eqref{eq:frob_obj} gives a weighted Frobenius norm,
\begin{align}
\label{eq:weighted_frob}
\lVert W X - W' X \rVert_F^2
&= \mathrm{tr}\!\bigl((W-W')\, C\, (W-W')^\top\bigr) \nonumber\\
&= \lVert (W-W')\, S \rVert_F^2 ,
\end{align}
where $S\in \mathbb{R}^{n\times n}$ is any factor such that $SS^\top = C$, for instance, the Cholesky decomposition of $C$. Hence, for a fixed rank budget $k$, minimizing the activation reconstruction error is equivalent to finding a rank $k$ approximation of the whitened matrix $W S$ under the Frobenius norm.

\subsection{Truncation aware whitening and whitened SVD}
In practice $C$ is unknown and must be estimated from a calibration set. We therefore compute a
numerically stable whitening factor $S \in \mathbb{R}^{n \times n}$ by taking the Cholesky factorization of $C+\lambda I$, where $\lambda>0$ is a small ridge term added for numerical stability.
Then, we define the whitened weight matrix
\begin{equation}
\label{eq:whitened_matrix}
A = WS ,
\end{equation}
and compute its SVD, $A = U\Sigma V^\top$. For a target rank $k$, let $A_k = U_{k}\Sigma_k V_{k}^\top$ denote the truncated SVD decomposition. Mapping back yields the compressed weight
\begin{equation}
\label{eq:Wprime_from_Ak}
W'_k = A_k S^{-1} = U_k \Sigma_k V_k^\top S^{-1}.
\end{equation}
For implementation convenience, we parameterize the rank-$k$ approximation as  $W'_k = W'_u W'_v$, with:
\begin{equation}
\label{eq:lowrank_factors}
W'_u = U_k \Sigma_k^{1/2} \in \mathbb{R}^{m \times k},
\qquad
W'_v = \Sigma_k^{1/2} V_k^\top S^{-1} \in \mathbb{R}^{k \times n}.
\end{equation}

\begin{theorem}[Whitened truncation yields singular value reconstruction loss]
\label{thm:whitened_loss}
Assume $SS^\top = XX^\top$ and let $A = WS = U\Sigma V^\top$. Let $A_k$ be the rank-$k$ truncation
of $A$ and $W'_k = A_k S^{-1}$. Then
\begin{equation}
\label{eq:multi_sv_loss}
\lVert WX - W'_k X \rVert_F^2
= \sum_{i>k} \sigma_i^2 .
\end{equation}
\end{theorem}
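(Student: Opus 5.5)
The plan is to reduce the activation reconstruction error to a Frobenius-norm statement about the whitened matrix $A$, and then apply the standard computation of the error of a truncated SVD. This is the identity \eqref{eq:weighted_frob} combined with orthogonal invariance of the Frobenius norm. The only hypothesis needed beyond the statement is that $S$ is invertible, so that $W'_k = A_k S^{-1}$ is well defined. In the theorem $SS^\top = XX^\top$ exactly, so I will assume $XX^\top$ is nonsingular, i.e.\ $X$ has full row rank. This is implicitly assumed whenever $S^{-1}$ is written, and it holds whenever a ridge-regularized Cholesky factor of $C+\lambda I$ is used.

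First I expand the loss. We have $\lVert WX - W'_kX\rVert_F^2 = \mathrm{tr}\bigl((W-W'_k)XX^\top(W-W'_k)^\top\bigr)$. Substituting $XX^\top = SS^\top$ gives $\lVert (W-W'_k)S\rVert_F^2$, exactly as in \eqref{eq:weighted_frob}. Next, $(W - W'_k)S = WS - A_kS^{-1}S = A - A_k$, so the loss equals $\lVert A - A_k\rVert_F^2$.

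Then I compute this norm from the SVD. Write $A = \sum_{i=1}^r \sigma_i u_i v_i^\top$ and $A_k = \sum_{i\le k}\sigma_i u_i v_i^\top$. This gives $A - A_k = U(\Sigma - \Sigma^{(k)})V^\top$, where $\Sigma^{(k)}$ zeroes out the entries $\sigma_i$ with $i>k$. Since $U$ and $V$ are orthogonal, $\lVert U M V^\top\rVert_F^2 = \mathrm{tr}(V M^\top U^\top U M V^\top) = \lVert M\rVert_F^2$. Applying this with $M = \Sigma-\Sigma^{(k)}$ yields $\sum_{i>k}\sigma_i^2$, which is \eqref{eq:multi_sv_loss}.

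I do not expect a genuine obstacle, since each step is a direct algebraic identity. The one point needing care is the invertibility of $S$, which I handle by stating the full-rank assumption explicitly. I might also remark, though the theorem does not require it, that by Eckart–Young this error is the minimum over all rank-$k$ $W'$, because $W' \mapsto W'S$ is a rank-preserving bijection.
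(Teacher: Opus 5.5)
Your proposal is correct and matches the paper's proof step for step: the trace expansion, the substitution $SS^\top = XX^\top$, the identity $(W-W'_k)S = A - A_k$, and then the SVD computation. Your orthogonal-invariance argument is the same fact as the paper's Frobenius-orthonormality of the $u_i v_i^\top$. Making invertibility of $S$ explicit is a sensible clarification (the paper states it only in the corollary's proof), though with a ridge-regularized factor one has $SS^\top = XX^\top + \lambda I$, so the theorem's exact hypothesis then no longer holds.
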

\begin{corollary}[Optimality of truncated SVD in the whitened space]
\label{cor:optimal_trunc}
Under the assumptions of Theorem~\ref{thm:whitened_loss}, $W'_k$ minimizes
$\lVert WX - W'X \rVert_F$ over all rank $k$ matrices $W'$. Equivalently, truncating the smallest
singular values of $A = WS$ is optimal for activation reconstruction at rank $k$.
\end{corollary}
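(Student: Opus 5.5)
The approach is to reduce the activation-space problem to an unweighted low-rank approximation problem for $A = WS$, and then invoke the Eckart--Young--Mirsky theorem.

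First, fix an arbitrary $W'$ of rank at most $k$. By the expansion in \eqref{eq:weighted_frob}, using $SS^\top = XX^\top$,
\[
\lVert WX - W'X\rVert_F^2 = \lVert (W - W')S\rVert_F^2 = \lVert A - B\rVert_F^2, \qquad B := W'S .
\]
Since $\operatorname{rank}(B) \le \operatorname{rank}(W') \le k$, every admissible $W'$ yields a matrix $B$ of rank at most $k$.

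Next, we check that this correspondence goes the other way as well. This is the one place needing care. I will assume $S$ is invertible, which holds for the ridge-regularized Cholesky factor of $C + \lambda I$, and also whenever $XX^\top$ is positive definite. The same invertibility is already implicit in Theorem~\ref{thm:whitened_loss} through the definition $W'_k = A_k S^{-1}$. Given any $B$ of rank at most $k$, the matrix $W' = BS^{-1}$ has the same rank and satisfies $W'S = B$. So $W' \mapsto W'S$ is a bijection between rank-$\le k$ matrices that preserves the objective. The two minimization problems are therefore equivalent:
\[
\min_{\operatorname{rank}(W')\le k} \lVert WX - W'X\rVert_F \;=\; \min_{\operatorname{rank}(B)\le k} \lVert A - B\rVert_F .
\]

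By the Eckart--Young--Mirsky theorem, the right-hand minimum equals $\bigl(\sum_{i>k}\sigma_i^2\bigr)^{1/2}$ and is attained at $B = A_k$. The corresponding $W'$ is $A_k S^{-1} = W'_k$.

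Finally, Theorem~\ref{thm:whitened_loss} confirms consistency: $W'_k$ achieves exactly $\sum_{i>k}\sigma_i^2$, which is the lower bound. This establishes both that $W'_k$ is optimal and the equivalent statement that truncating the smallest singular values of $A$ is optimal. If $S$ were only a non-invertible factor, the lower-bound direction (rank of $W'S$ is at most $k$) would still hold. In that case optimality follows as long as $W'_k$ is defined so that $W'_k S = A_k$, so invertibility is not essential beyond the definition of $W'_k$ itself.
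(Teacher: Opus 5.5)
Your proposal is correct and follows essentially the same route as the paper: assume $S$ invertible (guaranteed by the ridge term), use $W' \mapsto W'S$ to identify the two rank-$\le k$ minimization problems, then apply Eckart--Young--Mirsky and map back through $S^{-1}$. Your closing remark on non-invertible $S$ is a small refinement over the paper's version, and so is your avoidance of a uniqueness claim, which the paper asserts but which would require $\sigma_k > \sigma_{k+1}$.
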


We argue that while keeping $\|WX - W'X\|_F$ small is desirable, this reconstruction error does not directly capture how rank truncation impacts the calibration loss. We therefore propose to quantify the contribution of the singular values of each whitened weight matrix $A$ to the loss, and to use this singular-value ``importance'' to guide rank truncation, while remaining mindful of computational cost.

\section{Method}
\subsection{Gradient-based singular value sensitivity}
\label{sec:prelim_grad_sens}
\vspace{-.1in}
We aim to assign an \emph{importance} score to each singular value of the whitened matrix
$A = WS$, so as to directly quantify its contribution to the calibration loss and to determine which components to prune under a global rank budget. Let $A = U\Sigma V^\top$ with singular values $\{\sigma_i\}_{i=1}^r$. Consider removing the $i$'th
component, i.e., $\sigma_i \leftarrow 0$, while keeping the singular vectors fixed. This induces a perturbation
\begin{equation}
\label{eq:dA_i}
\Delta A_i \;=\; -\,\sigma_i \, u_i v_i^\top .
\end{equation}
Our goal is to estimate the induced change in the calibration loss $\mathcal{L}$ via a first-order approximation around the current model, taken with respect to the singular values of $A$.

Let $G_W = \nabla_W \mathcal{L}$ denote the gradient with respect to the original weight $W$.
Because $A = WS$, a perturbation $\Delta A$ corresponds to $\Delta W = \Delta A\, S^{-1}$, and
the first order loss change satisfies,
\begin{equation}
\label{eq:loss_linearization}
\begin{aligned}
\Delta \mathcal{L}
\approx \langle G_W,\Delta W\rangle 
= \langle G_W,\Delta A\, S^{-1}\rangle
= \langle G_W S^{-\top}, \Delta A\rangle .
\end{aligned}
\end{equation}
We then define $H \;=\; G_W S^{-\top}$ to be the whitened gradient,
which can be efficiently computed when $S$ is triangular (e.g., Cholesky of $XX^T$). Substituting \eqref{eq:dA_i} into \eqref{eq:loss_linearization} yields an estimate for the loss
change from dropping $\sigma_i$:
\begin{equation}
\label{eq:deltaL_i}
\Delta \mathcal{L}_i
\;\approx\;
-\,\sigma_i \,\langle H, u_i v_i^\top \rangle
\;=\;
-\,\sigma_i \, u_i^\top H v_i .
\end{equation}
Collecting the per-component terms gives the vector
\begin{equation}
\label{eq:grad_sigma_def}
g_\sigma \;=\; \mathrm{diag}\!\bigl(U^\top H V\bigr) \in \mathbb{R}^r ,
\end{equation}
where $g_{\sigma,i} = u_i^\top H v_i$. In other words, $g_{\sigma,i}$ measures the first-order sensitivity of $\mathcal{L}$ to the singular
value $\sigma_i$ in the whitened parameterization. Under the perturbation $\sigma_i\!\leftarrow\!0$,
the linearized loss change is $\Delta\mathcal{L}_i \approx -\sigma_i g_{\sigma,i}$, so both magnitude
and sign matter: $|\sigma_i g_{\sigma,i}|$ quantifies the predicted impact, while
$\operatorname{sign}(g_{\sigma,i})$ determines its direction. In particular, if $g_{\sigma,i}>0$ then
$\Delta\mathcal{L}_i<0$, meaning that dropping component $i$ is predicted to \emph{decrease} the
calibration loss; if $g_{\sigma,i}<0$, the loss is predicted to increase. This observation motivates
the selection rule developed in the next section.

\vspace{-.1in}
\subsection{Global budgeted truncation with zero sum selection}
\label{sec:prelim_budget}
\vspace{-.1in}


Low-rank truncation compresses a dense $W\in\mathbb{R}^{m\times n}$ by storing $k(m+n)$ parameters instead of $mn$. Prior methods either fix per-layer ranks via a closed-form rule (e.g., $k=\lfloor \rho\, mn/(m+n)\rfloor$)~\cite{svdllm} or allocate ranks by solving a global optimization~\cite{dobisvd}. We instead select individual singular components globally (in the whitened space), which naturally induces heterogeneous ranks across layers while matching a target retention rate, without expensive cross-layer optimization. We next introduce our zero-sum selection rule.

\noindent\textbf{Zero-sum selection rule.}
We propose a greedy multi-layer truncation scheme that preserves per-layer spectral order while using signed first-order singular-value importance (in the whitened space) to keep the cumulative predicted loss change near zero. Within each matrix, we prune singular components from smallest to largest $\sigma_i$, so the next candidate is always the smallest remaining singular value. We maintain a global pool containing this next candidate from every matrix and iteratively select components so that the running sum of predicted loss changes stays close to zero.

Specifically, let $\mathcal{D}$ denote the set of components removed so far, and define the running
cumulative predicted loss change as $s=\sum_{j\in\mathcal{D}}\Delta\mathcal{L}_j$, where $\Delta \mathcal{L}_i$ is defined in Eq.~\eqref{eq:deltaL_i}. At each iteration,
we choose the next component so that its signed contribution tends to counteract the current drift of $s$
and move it back toward zero.

 To implement this efficiently, we maintain two min-heaps partitioned by the sign of the predicted loss
change: $\mathcal{Q}_{+}$ contains candidates with $\Delta\mathcal{L}_i \ge 0$ and
$\mathcal{Q}_{-}$ contains candidates with $\Delta\mathcal{L}_i < 0$. Within each heap, candidates are
keyed by $|\Delta\mathcal{L}_i|$, so among candidates of the chosen sign, we select the one with the
smallest predicted impact.

\begin{figure}[t!]
    \centering
    \includegraphics[width=\columnwidth]{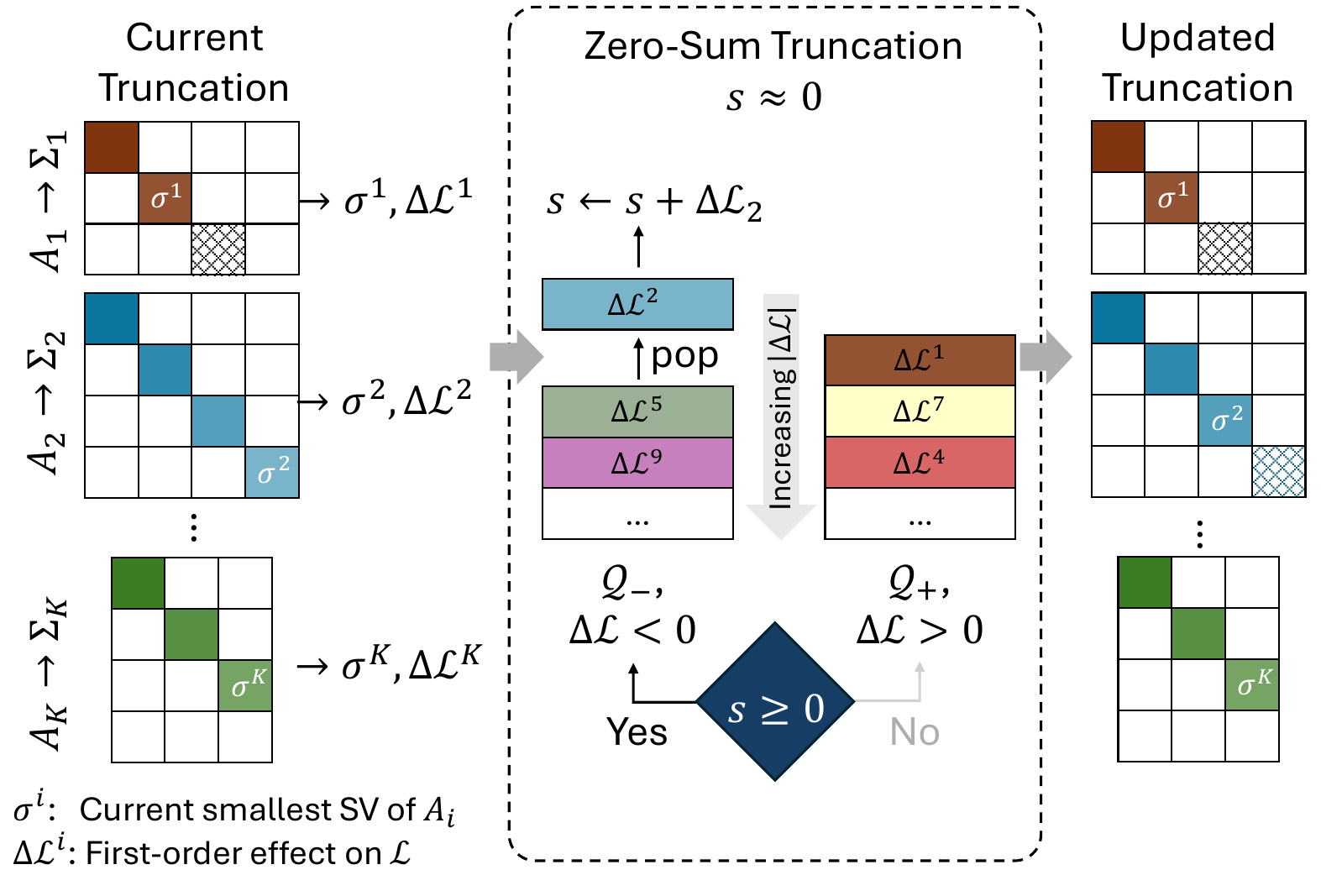}
    \vspace{-.25in}    
    \caption{Zero-sum selection rule across different weight matrices.}
    \label{fig:zero_sum}
\vspace{-.2in}
\end{figure}

Given the current running sum $s$, we choose a preferred sign and pop from the corresponding heap,
falling back to the other heap if the preferred heap is empty:
\begin{equation}
\label{eq:prefer_sign}
\text{prefer } \mathcal{Q}_{+} \text{ if } s \le 0,\quad
\text{prefer } \mathcal{Q}_{-} \text{ if } s > 0.
\end{equation}
After selecting candidate $i$, we update $s \leftarrow s + \Delta \mathcal{L}_i$, remove the corresponding
singular component from the corresponding matrix, and insert that matrix's next available singular component into the appropriate heap.

This design couples a local ordering based on $\sigma_i$, which controls the reconstruction distortion in
the whitened space, with a global ordering based on $\Delta \mathcal{L}_i$, which controls the predicted
change in task loss. The combination strikes a balance between preserving activation level fidelity and
preserving end task performance under a shared global budget.

\noindent\textbf{Stopping criterion.} Selection proceeds until the global parameter-removal budget is met. We maintain a running count of
removed parameters: removing one singular component from an $m\times n$ weight corresponds to reducing
the factor rank by one, saving $(m+n)$ parameters in the low-rank representation. Upon termination, the
remaining components in each module determine its final rank, and we form the compressed weight by
truncating $\Sigma$ accordingly and storing the resulting factors as in Eq.~\eqref{eq:lowrank_factors}.

\vspace{-.1in}
\subsection{Light correction step}
\label{sec:pull_motivation}
\vspace{-.1in}

Our compression pipeline alternates between \emph{truncation} and, optionally, \emph{correction}. We first truncate each weight matrix by projecting it onto a low-rank form to meet the global retention target, as described in the previous subsection. We may then apply a correction step that temporarily allows
the weights to deviate from the low-rank manifold to recover expressivity and reduce task loss, after which we re-truncate (re-project) back to the target low-rank form. The key trade-off is that correction should meaningfully improve loss, yet remain sufficiently small so that the subsequent re-truncation does not incur a
large projection error.

\noindent{\textbf{One-Step Correction.}}
Let $W\in\mathbb{R}^{m\times n}$ be the original weight matrix and let $W'_k$ be its rank-$k$ truncation. Define the truncation residual $\Delta W \triangleq W - W'_k$, so that $W = W'_k + \Delta W$. Let $\mathcal{L}$ denote the calibration loss, and use the Frobenius inner product $\langle A,B\rangle \triangleq \mathrm{tr}(A^\top B)$. A first-order expansion around $W'_k$ gives
\begin{equation}\label{eq:taylor}
\mathcal{L}(W) = \mathcal{L}(W'_k+\Delta W) \approx \mathcal{L}(W'_k) + \langle g, \Delta W\rangle,
\end{equation}
where  $g \triangleq \nabla_W \mathcal{L}(W'_k)$. We seek the smallest perturbation $\Delta$ to $W'_k$ that matches the same first-order loss change, i.e.,
$\langle g,\Delta\rangle = \langle g,\Delta W\rangle$. The minimum-Frobenius-norm solution is the projection of $\Delta W$ onto $g$:
\begin{equation}\label{eq:proj}
\Delta W' \triangleq \frac{\langle g,\Delta W\rangle}{\langle g,g\rangle}\, g .
\end{equation}
By construction, $\langle g,\Delta W'\rangle=\langle g,\Delta W\rangle$, hence from \eqref{eq:taylor},
$\mathcal{L}(W'_k+\Delta W) \approx \mathcal{L}(W'_k+\Delta W')$ up to first-order terms. 

\noindent{\textbf{Re-truncation.}} After the one-step correction, the updated weight matrix is $W'_k+\Delta W'$. The following lemma upper-bounds the rank after an additive update.

\begin{lemma}[Rank bound under additive correction]
\label{lem:rank_addition}
For any matrices $A$ and $B$ of compatible dimensions, $\mathrm{rank}(A+B)\le \mathrm{rank}(A)+\mathrm{rank}(B)$.
\end{lemma}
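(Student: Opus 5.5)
The plan is to use the column-space characterization of rank: for any matrix $M\in\mathbb{R}^{m\times n}$, $\mathrm{rank}(M)=\dim\mathrm{col}(M)$, where $\mathrm{col}(M)=\{Mx : x\in\mathbb{R}^n\}\subseteq\mathbb{R}^m$. With this in hand, the statement reduces to a containment of subspaces followed by a dimension count.

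\emph{Containment.} Let $A,B\in\mathbb{R}^{m\times n}$. For every $x\in\mathbb{R}^n$ we have $(A+B)x=Ax+Bx$, and $Ax\in\mathrm{col}(A)$, $Bx\in\mathrm{col}(B)$. Hence
\[
\mathrm{col}(A+B)\subseteq \mathrm{col}(A)+\mathrm{col}(B),
\]
where the right-hand side denotes the Minkowski sum of the two subspaces, which is itself a subspace of $\mathbb{R}^m$.

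\emph{Dimension count.} Choose a basis $\{a_1,\dots,a_p\}$ of $\mathrm{col}(A)$ and a basis $\{b_1,\dots,b_q\}$ of $\mathrm{col}(B)$, so $p=\mathrm{rank}(A)$ and $q=\mathrm{rank}(B)$. Every element of $\mathrm{col}(A)+\mathrm{col}(B)$ is a linear combination of the $p+q$ vectors $a_1,\dots,a_p,b_1,\dots,b_q$, so these vectors span the sum. A spanning set of size $p+q$ forces
\[
\dim\bigl(\mathrm{col}(A)+\mathrm{col}(B)\bigr)\le p+q .
\]
Since dimension is monotone under inclusion of subspaces, combining this with the containment step gives
\[
\mathrm{rank}(A+B)\le \mathrm{rank}(A)+\mathrm{rank}(B).
\]

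There is no real obstacle in this argument. The only point needing care is that $\mathrm{col}(A)+\mathrm{col}(B)$ can be a proper subspace of whatever spaces we work in, and the two bases may overlap. Neither affects the argument, because we only need an upper bound on dimension, and a spanning set always gives one.

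In the paper's setting the lemma is applied with $A=W'_k$ and $B=\Delta W'$, where $\Delta W'$ is a scalar multiple of $g$. It then guarantees $\mathrm{rank}(W'_k+\Delta W')\le k+\mathrm{rank}(g)$. This is why a low-rank gradient keeps the corrected matrix close to the low-rank manifold, so that the subsequent re-truncation incurs little projection error.
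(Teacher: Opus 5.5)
Your argument is correct and complete. The containment $\mathrm{col}(A+B)\subseteq\mathrm{col}(A)+\mathrm{col}(B)$ together with the spanning-set bound on the dimension of the sum is the standard proof; the paper gives no proof of this lemma at all (the appendix proves only Theorem~\ref{thm:whitened_loss} and Corollary~\ref{cor:optimal_trunc}) and treats it as a textbook fact, and your stated consequence $\mathrm{rank}(W'_k+\Delta W')\le k+\mathrm{rank}(g)$ is slightly more careful than the paper's claimed equality $\mathrm{rank}(\Delta W')=\mathrm{rank}(\nabla_W\mathcal{L}(W'_k))$, which fails when $\langle g,\Delta W\rangle=0$.
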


By Lemma~\ref{lem:rank_addition}, if $\mathrm{rank}(W'_k)=k$ and the correction satisfies $\mathrm{rank}(\Delta W')\le \ell$, then $\mathrm{rank}(W'_k+\Delta W')\le k+\ell$. Hence, after the one-step correction, we must re-truncate the corrected weights to enforce rank $k$. Importantly, when the correction $\Delta W'$ is low-rank (small $\ell$), the updated weights remain close to the rank-$k$ manifold, so the subsequent rank-$k$ projection incurs a smaller re-truncation error.


Interestingly, the rank of $\nabla_W \mathcal{L}(W)\in \mathbb{R}^{m\times n}$ has attracted recent attention, with growing evidence that gradients are often low-rank. This stems from the outer-product structure of backpropagation: for a fully connected layer, $\nabla_W \mathcal{L}=\sum_{i=1}^{B}\delta_i a_i^\top$, where $\delta_i$ is the backpropagated error and $a_i$ is the input for sample $i$ in a batch of size $B$~\cite{baker2024low,gelberg2025gradmetanet}. Thus $\mathrm{rank}(\nabla_W \mathcal{L})\le B$, and empirically the effective rank is often much smaller due to correlations in activations and error signals~\cite{vogels2019powersgd}. This has motivated methods that exploit gradient low-rankness, including PowerSGD~\cite{vogels2019powersgd}, LoRA~\cite{hu2022lora}, and GaLore~\cite{zhao2024galore}; see~\citet{balzano2025overview} for a survey.

We leverage this low-rank structure directly: since $\mathrm{rank}(\Delta W') = \mathrm{rank}(\nabla_W \mathcal{L}(W'_k)) \leq \ell$, and as established above, $\ell$ is typically very small in practice, re-truncation after our one-step correction introduces only negligible error.

\begin{table*}[t]
  \caption{ZS-SVD vs.\ SVD-based compression methods on LLaMA-7B across maintenance ratios (0.8/0.6/0.4). Lower is better for PPL; best is in bold. ``1x/5x/10x'' denote the number of truncate--correct--re-truncate iterations. ($^\ast$) indicates results with Dobi-SVD-style remapping enabled. ($^\dagger$) denotes \textsc{HQ} (Half-prune+Quant), which we use in place of remapping at pruning $\ge 50\%$.\vspace{-.15in}}

  \label{tab:zs_svd_main}
  \begin{center}
    \begin{small}
      \begin{sc}
        \setlength{\tabcolsep}{1.9pt}
        \renewcommand{\arraystretch}{1.02}
        \begin{tabular}{c l c c c c c c c c c c c c}
          \toprule
          \multirow{2}{*}{Ratio} & \multirow{2}{*}{Method} &
          \multicolumn{3}{c}{PPL $\downarrow$} &
          \multicolumn{7}{c}{Acc $\uparrow$} &
          \multirow{2}{*}{Avg.\ $\uparrow$} &
          \multirow{2}{*}{Drop $\downarrow$} \\
          \cmidrule(lr){3-5}\cmidrule(lr){6-12}
          & & Wiki2 & PTB & C4 & Openb. & ARC\_e & ARC\_c & WinoG. & HellaS. & PIQA & MathQA & & \\
          \midrule
          1.0 & Baseline
          & 5.68 & 8.35 & 7.34
          & 0.34 & 0.75 & 0.42 & 0.70 & 0.57 & 0.79 & 0.28
          & 0.55 & 0.0\% \\
          \midrule
          \multirow{9}{*}{0.8}
          & ASVD
          & 11.14 & 16.55 & 15.93
          & 0.25 & 0.53 & 0.27 & 0.64 & 0.41 & 0.68 & 0.24
          & 0.43 & 21.8 \\
          & SVD-LLM
          & 7.94 & 16.22 & 15.84
          & 0.22 & 0.58 & 0.29 & 0.63 & 0.43 & 0.69 & 0.24
          & 0.44 & 20.0 \\
          & Dobi-SVD
          & 8.54 & 14.83 & 10.01
          & 0.26 & 0.59 & 0.31 & 0.66 & 0.44 & 0.70 & 0.23
          & 0.46 & 16.4 \\
          & ZS-SVD
          & 6.74 & 11.87 & 10.74
          & 0.31 & 0.70 & 0.37 & 0.68 & 0.49 & 0.74 & 0.24
          & 0.50 & 9.1 \\
          & ZS-SVD 1x
          & 6.61 & 11.26 & 10.57
          & 0.31 & 0.72 & 0.39 & 0.67 & 0.49 & 0.74 & 0.25
          & 0.51 & 7.3 \\
          & ZS-SVD 5x
          & 6.43 & 11.17 & 10.42
          & 0.32 & 0.72 & 0.39 & 0.67 & 0.50 & 0.75 & 0.25
          & 0.51 & 7.3 \\
          \cmidrule(lr){2-14}
          & Dobi-SVD$^\ast$
          & 6.08 & 15.39 & 7.83
          & 0.27 & 0.65 & 0.37 & 0.68 & 0.54 & 0.77 & 0.27
          & 0.51 & 7.3\\
          & ZS-SVD$^\ast$
          & 5.90 & 8.81 & 7.95
          & 0.35 & 0.74 & 0.41 & 0.70 & 0.56 & 0.78 & 0.26
          & \textbf{0.54} & \textbf{1.8} \\
          \midrule
          \multirow{9}{*}{0.6}
          & ASVD
          & 1407 & 3292 & 1109
          & 0.13 & 0.28 & 0.22 & 0.48 & 0.26 & 0.55 & 0.19
          & 0.30 & 45.5 \\
          & SVD-LLM
          & 13.11 & 63.75 & 49.83
          & 0.19 & 0.42 & 0.25 & 0.58 & 0.33 & 0.60 & 0.21
          & 0.37 & 32.7 \\
          & Dobi-SVD
          & 13.54 & 46.38 & 23.54
          & 0.22 & 0.41 & 0.27 & 0.58 & 0.34 & 0.61 & 0.23
          & 0.38 & 30.9 \\
          & ZS-SVD
          & 11.44 & 43.19 & 34.13
          & 0.23 & 0.52 & 0.26 & 0.62 & 0.35 & 0.64 & 0.22
          & 0.41 & 25.5 \\
          & ZS-SVD 1x
          & 9.96 & 37.13 & 27.76
          & 0.24 & 0.56 & 0.28 & 0.61 & 0.37 & 0.64 & 0.22
          & 0.42 & 23.6 \\
          & ZS-SVD 5x
          & 9.45 & 36.52 & 26.20
          & 0.23 & 0.56 & 0.29 & 0.62 & 0.38 & 0.66 & 0.22
          & 0.42 & 23.6 \\
          \cmidrule(lr){2-14}
          & Dobi-SVD$^\ast$
          & 8.12 & 43.85 & 12.63
          & 0.28 & 0.65 & 0.32 & 0.62 & 0.45 & 0.72 & 0.25
          & 0.47 & 14.5 \\
          & ZS-SVD$^\ast$
          & 6.96 & 12.72 & 11.52
          & 0.32 & 0.71 & 0.36 & 0.68 & 0.48 & 0.74 & 0.24
          & \textbf{0.50} & \textbf{9.1} \\
          \midrule
          \multirow{9}{*}{0.4}
          & ASVD
          & 57057 & 45218 & 43036
          & 0.12 & 0.26 & 0.21 & 0.49 & 0.26 & 0.53 & 0.18
          & 0.29 & 47.3 \\
          & SVD-LLM
          & 53.74 & 438.58 & 345.49
          & 0.14 & 0.28 & 0.22 & 0.50 & 0.27 & 0.55 & 0.21
          & 0.31 & 43.6\\
          & Dobi-SVD
          & 46.18 & 238.91 & 190.62
          & 0.15 & 0.31 & 0.20 & 0.52 & 0.28 & 0.54 & 0.22
          & 0.32 & 41.8 \\
          & ZS-SVD
          & 45.17 & 334.85 & 212.57
          & 0.15 & 0.31 & 0.21 & 0.54 & 0.28 & 0.55 & 0.21
          & 0.32 & 41.8 \\
          & ZS-SVD 1x
          & 26.92 & 222.93 & 121.32
          & 0.17 & 0.35 & 0.21 & 0.54 & 0.29 & 0.56 & 0.21
          & 0.33 & 40.0 \\
          & ZS-SVD 5x
          & 20.41 & 163.58 & 88.56
          & 0.17 & 0.37 & 0.22 & 0.55 & 0.30 & 0.57 & 0.21
          & 0.34 & 38.2 \\
          & ZS-SVD 10x
          & 18.49 & 144.89 & 76.86
          & 0.17 & 0.38 & 0.23 & 0.55 & 0.30 & 0.58 & 0.22
          & 0.35 & 36.4 \\
          \cmidrule(lr){2-14}
          & Dobi-SVD$^\ast$
          & 9.95 & 67.62 & 17.94
          & 0.23 & 0.52 & 0.24 & 0.56 & 0.38 & 0.65 & 0.23
          & 0.40 & 27.3 \\
          & ZS-SVD$^\dagger$
          & 6.73 & 11.78 & 10.69
          & 0.29 & 0.72 & 0.38 & 0.68 & 0.49 & 0.75 & 0.25
          & \textbf{0.51} & \textbf{7.3} \\
          \bottomrule
        \end{tabular}
      \end{sc}
    \end{small}
  \end{center}
  \vskip -0.1in
  \vspace{-.2in}
\end{table*}
\vspace{-.1in}
\subsection{Remapping}
\vspace{-.1in}
Note that the parameter ratio for a rank-$k$ SVD factorization, $\rho = \frac{k(m+n)}{mn}$, measures storage rather than rank. In particular, $\rho=1$ implies $k=\frac{mn}{m+n}<\min(m,n)=\mathrm{rank}(W)$ (e.g., for $m=n$, $k=n/2$), so the ratio saturates before full rank and is not in one-to-one correspondence with $k$ over $\rho\in(0,1]$. Following Dobi-SVD~\citep{dobisvd}, we also report \emph{remap} variants that parameterize compression by the fraction of retained singular components. Specifically, for $\tilde\rho\in(0,1]$ we set $k=\lfloor \tilde\rho\,\mathrm{rank}(W)\rfloor \approx \lfloor \tilde\rho\,\min(m,n)\rfloor$. Dobi-SVD then uses a packed storage format in which the truncated factors $U_k\in\mathbb{R}^{m\times k}$ and $V_k\in\mathbb{R}^{n\times k}$ are implicitly encoded by storing only a modified $U_k$: assuming $m\ge n$, an 8-bit copy of $V_k$ is packed into the first $n$ rows of $U_k$, so the footprint scales as $k\cdot\max(m,n)$ (i.e., $m k$). Under this remapping, $\tilde\rho$ and $k$ are in one-to-one correspondence; see~\citep{dobisvd} for details.


\noindent\textbf{Remapping-aware truncation.}
Remapping integrates directly into our global selection framework by modifying only the budget accounting. Under Dobi-SVD remapping, we store truncated factors $U_k \in \mathbb{R}^{m \times k}$ and $V_k \in \mathbb{R}^{n \times k}$ using their packing scheme. Assuming $m \ge n$, each rank-$1$ component contributes $(m-n)$ fp16 entries from the remaining rows of $U_k$, and $n$ fp8 entries that pack the first $n$ rows of $U_k$ together with the corresponding entries of $V_k$ ($n$ additional fp8). Hence, dropping one singular component reduces storage by $2(m-n)+2n=2m$ bytes, which corresponds to $\max(m,n)$ fp16-equivalent parameters. We therefore set $\mathrm{cost}(\ell)=\max(m_\ell,n_\ell)$ for remap-aware selection, while leaving the rest of the algorithm unchanged.

\vspace{-.1in}
\section{Experiments}
\label{sec:experiments}

\noindent\textbf{Models, datasets, and evaluation.}
We evaluate along two axes: language modeling quality (perplexity) and downstream reasoning (zero-shot accuracy). Perplexity is reported on WikiText2 \citep{wikitext}, Penn Treebank \citep{ptb}, and C4 \citep{c4}; zero-shot accuracy on OpenBookQA \citep{openbookqa}, ARC-Easy/Challenge \citep{arc}, WinoGrande \citep{winogrande}, HellaSwag \citep{hellaswag}, PIQA \citep{piqa}, and MathQA \citep{mathqa}, computed with LM-Evaluation-Harness \citep{lmeval}. Lower perplexity and higher accuracy are better; we also report the mean accuracy over the seven tasks and its relative drop from the uncompressed model. Following prior work, we truncate only the main transformer linear matrices (attention projections (q,k,v,o) and MLP layers) and use a WikiText2 calibration set of 256 sequences of length 2048, matching SVD-LLM \citep{svdllm} and Dobi-SVD.

\begin{table}[t]
  \caption{PPL on WikiText2 (W2), PTB, and C4, plus Avg on 7 commonsense tasks, at 30\% pruning.\vspace{-.15in}}
  \label{tab:30percent_dipsvd}
  \begin{center}
    \begin{small}
      \begin{sc}
        \setlength{\tabcolsep}{1.9pt}
        \renewcommand{\arraystretch}{1.02}
        \begin{tabular}{l c c c c | c c c c}
          \toprule
          \multirow{2}{*}{Method} &
          \multicolumn{4}{c|}{LLaMA-7B} &
          \multicolumn{4}{c}{Vicuna-7B} \\
          \cmidrule(lr){2-5} \cmidrule(lr){6-9}
          & W2 & PTB & C4 & Avg & W2 & PTB & C4 & Avg \\
          \midrule
          ASVD    & 95.3 & 200.9 & 86.3 & 0.36 & 91.4 & 415.6 & 136.2 & 0.32 \\
          FWSVD   & 33.0 & 53.6  & 38.2 & 0.39 & 43.7 & 239.3 & 64.8  & 0.36 \\
          SVD-LLM & 9.5  & 29.0  & 26.4 & 0.40 & 12.4 & 124.5 & 39.5  & 0.40 \\
          Dip-SVD & 9.4 & 22.3 & 19.9 & 0.44 & 12.1 & 81.1 & 28.8 & 0.43 \\
                   \cmidrule{1-9}
          ZS-SVD  & 8.2 & 19.6 & 16.8 & 0.46 & 10.2 & 48.0 & 21.8 & 0.46 \\
          \bottomrule
        \end{tabular}
      \end{sc}
    \end{small}
  \end{center}
  \vskip -0.1in
  \vspace{-.25in}
\end{table}

\noindent\textbf{Perplexity and accuracy comparison.}
Table~\ref{tab:zs_svd_main} reports our main results on LLaMA-7B under maintenance ratios $0.8$, $0.6$, and $0.4$, corresponding to truncating $20\%$, $40\%$, and $60\%$ of the target weights, respectively. We compare against strong SVD-based baselines, including ASVD~\citep{asvd}, SVD-LLM~\citep{svdllm}, and Dobi-SVD~\citep{dobisvd}. Across all ratios, our zero-sum component selection consistently outperforms these baselines, improving both perplexity and zero-shot accuracy.


We also evaluate our optional correction procedure, which alternates truncation with a calibration-driven update while preserving low rank. After truncating to rank-$k$, we apply a one-step correction and then re-truncate to remove any rank growth; we repeat this truncate--correct--re-truncate cycle for 1, 5, or 10 updates. More updates consistently improve perplexity and mean accuracy, with the largest gains under aggressive compression, especially at maintenance ratio $0.4$. 


 Because Dobi-SVD reports most results with remapping enabled (except for LLaMA-7B), we adopt the same setting when comparing against Dobi-SVD. To enable fair comparison with methods that do not apply remapping, we also report non-remapped results for our method and all baselines. Since remapping effectively combines truncation with quantization, for pruning ratios $\ge 50\%$ we consider two footprint-matching strategies: (i) Dobi-SVD remapping, or (ii) prune to half the target ratio and uniformly halve the bit-width of all target parameters, which yields the same overall footprint reduction. The latter performs better under aggressive compression, and we denote it \textsc{HQ} (Half-prune + Quantize). For pruning ratios $<50\%$, we follow the standard Dobi-SVD remapping protocol. More generally, both remapping and \textsc{HQ} are practical mechanisms for coupling truncation and quantization under a footprint constraint; improved joint budget allocation remains an interesting direction for future work.


\begin{figure}[t!]
    \centering
    \includegraphics[width=.9\columnwidth]{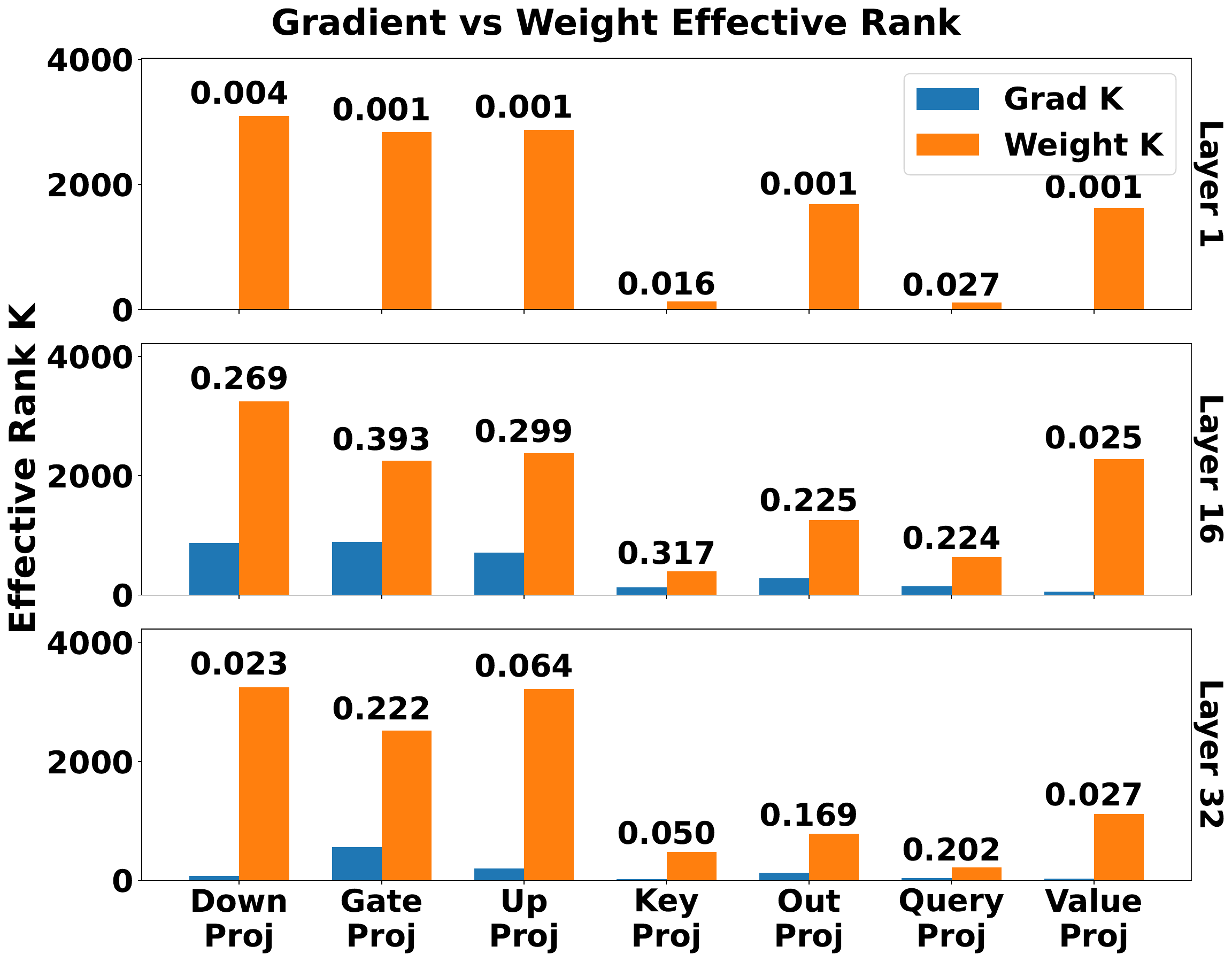}
    \vspace{-.1in}
    \caption{The numbers above each bar indicate the effective-rank ratio $k_{0.95}(G)/k_{0.95}(W')$ (grad over weight).}
    \vspace{-.3in}
    \label{fig:rank_ratio}
\end{figure}

Table~\ref{tab:30percent_dipsvd} compares our method against Dip-SVD~\cite{ding2025dipsvd}, FWSVD~\cite{fwsvd}, SVD-LLM~\cite{svdllm}, and ASVD~\cite{asvd} under $30\%$ pruning on LLaMA-7B and Vicuna-7B. We report perplexity on WikiText-2, PTB, and C4, together with the average accuracy over our commonsense reasoning suite.  We include Dip-SVD as a recent gradient-informed baseline, but it relies on a heuristic per-matrix importance score rather than estimating first-order loss changes for individual singular components. Because Dip-SVD had no official implementation available at the time of writing, we run ZS-SVD under their stated $30\%$ pruning protocol and report our results alongside their reported numbers.

Tables~\ref{tab:llama2_7b_commonsense} and ~\ref{tab:llama-13b_commonsense} compare our method against popular structured pruning baselines, including LLM-Pruner~\cite{llmpruner}, SliceGPT~\cite{slicegpt}, Bonsai~\cite{bonsai}, Wanda-sp~\cite{wandasp}, and FLAP~\cite{flap}, on LLaMA-2-7B and LLaMA-13B, respectively. 
Across pruning ratios and models, ZS-SVD consistently attains higher accuracy in commonsense reasoning.

Table~\ref{tab:cross_arch_30b} evaluates ZS-SVD under 20\% compression across OPT-6.7B, Vicuna-7B, and LLaMA-30B, with LLaMA-30B included to test scalability. We report WikiText-2 perplexity and average commonsense accuracy. Across all three architectures, ZS-SVD achieves a better perplexity–accuracy trade-off than prior SVD baselines, indicating that our global selection strategy generalizes across model families/sizes.

\begin{table}[t]
  \caption{ZS-SVD vs. pruning methods on LLaMA-2-7b.\vspace{-.15in}}
  \label{tab:llama2_7b_commonsense}
  \begin{center}
    \begin{small}
      \begin{sc}
        \setlength{\tabcolsep}{1.5pt}
        \renewcommand{\arraystretch}{1.02}
        \begin{tabular}{c l c c c c c c}
          \toprule
          \multirow{2}{*}{Ratio} & \multirow{2}{*}{Method} &
          \multicolumn{5}{c}{Acc $\uparrow$} &
          \multirow{2}{*}{Avg\ $\uparrow$}  \\
          \cmidrule(lr){3-7}
          & & PIQA & Hell & Win & ARC\_e & ARC\_c  \\
          \midrule
          1.0 & Baseline & 0.78 & 0.57 & 0.69 & 0.76 & 0.43 & 0.65 \\
          \midrule
          \multirow{8}{*}{0.6} & {\fontsize{7.5}{9}\selectfont LLM-Pruner} & 0.70 & 0.41 & 0.53 & 0.53 & 0.27 & 0.48 \\
          & SliceGPT & 0.65 & 0.57 & 0.60 & 0.43 & 0.32 & 0.51  \\
          & Bonsai & 0.72 & 0.45 & 0.58 & 0.59 & 0.30 & 0.53   \\
          & Wanda-sp & 0.70 & 0.42 & 0.53 & 0.57 & 0.29 & 0.50  \\
          \cmidrule(lr){2-8}
          & SVD-LLM & 0.56 & 0.30 & 0.57 & 0.39 & 0.21 & 0.41  \\
            & ZS-SVD & 0.63 & 0.34 & 0.60 & 0.46 & 0.25 & 0.45  \\
          \cmidrule(lr){2-8}
          & Dobi-SVD$^\ast$ & 0.72 & 0.45 & 0.64 & 0.67 & 0.31 & 0.56  \\
          & ZS-SVD$^\ast$ & 0.72 & 0.46 & 0.67 & 0.66 & 0.33 & 0.57  \\
          \midrule
          \multirow{4}{*}{0.4} & SVD-LLM & 0.54 & 0.27 & 0.48 & 0.26 & 0.20 & 0.35  \\
          & ZS-SVD & 0.54 & 0.27 & 0.52 & 0.29 & 0.19 & 0.36 \\
          \cmidrule(lr){2-8}
          & Dobi-SVD$^\ast$ & 0.67 & 0.38 & 0.57 & 0.55 & 0.26 & 0.49  \\
          & ZS-SVD$^\dagger$ & 0.73 & 0.48 & 0.68 & 0.70 & 0.36 & 0.59 \\
          \bottomrule
        \end{tabular}
      \end{sc}
    \end{small}
  \end{center}
  \vskip -0.1in
  \vspace{-.1in}
\end{table}


\begin{table}[t]
  \caption{ZS-SVD vs. pruning methods on LLaMA-13b.\vspace{-.1in}}
  \label{tab:llama-13b_commonsense}
  \begin{center}
    \begin{small}
      \begin{sc}
        \setlength{\tabcolsep}{1.8pt}
        \renewcommand{\arraystretch}{1.02}
        \begin{tabular}{c l c c c c c c}
          \toprule
          \multirow{2}{*}{Ratio} & \multirow{2}{*}{Method} &
          \multicolumn{5}{c}{Acc $\uparrow$} &
          \multirow{2}{*}{Avg $\uparrow$} \\
          \cmidrule(lr){3-7}
          & & \smallcell{BoolQ} & \smallcell{PIQA} & \smallcell{Win} & \smallcell{ARC\_e} & \smallcell{ARC\_c} & \\
          \midrule
          1.0 & Baseline & 0.78 & 0.79 & 0.73 & 0.77 & 0.47 & 0.70 \\
          \midrule
          \multirow{6}{*}{0.8} & {\fontsize{7.5}{9}\selectfont LLM-Pruner} & 0.67 & 0.77 & 0.65 & 0.68 & 0.38 & 0.63 \\
          & FLAP & 0.70 & 0.78 & 0.69 & 0.73 & 0.43 & 0.66 \\
          & SVD-LLM & 0.63 & 0.73 & 0.69 & 0.69 & 0.35 & 0.62 \\
          & ZS-SVD & 0.77 & 0.77 & 0.70 & 0.73 & 0.43 & 0.68 \\
          \cmidrule(lr){2-8}
          & Dobi-SVD$^\ast$ & 0.69 & 0.79 & 0.72 & 0.76 & 0.47 & 0.68 \\
          & ZS-SVD$^\ast$ & 0.77 & 0.79 & 0.73 & 0.75 & 0.46 & 0.70 \\
          \bottomrule
        \end{tabular}
      \end{sc}
    \end{small}
  \end{center}
  \vskip -0.1in
  \vspace{-.2in}
\end{table}

\begin{table}[t]
\centering
\small
\setlength{\tabcolsep}{4.0pt}
\renewcommand{\arraystretch}{1.03}
\caption{Perplexity (PPL, $\downarrow$) on WikiText-2 and average accuracy (Acc, $\uparrow$) on six commonsense reasoning datasets (excluding arc\_c) for OPT-6.7B, Vicuna-7B, and LLaMA-30B at 20\% pruning.\vspace{-.1in}}
\label{tab:cross_arch_30b}
\begin{tabular}{l c c c c c c}
\toprule
& \multicolumn{2}{c}{\textbf{OPT-6.7B}} & \multicolumn{2}{c}{\textbf{Vicuna-7B}} & \multicolumn{2}{c}{\textbf{LLaMA-30B}} \\
\cmidrule(lr){2-3} \cmidrule(lr){4-5} \cmidrule(lr){6-7}
\textbf{Method} & \textbf{PPL} & \textbf{Acc} & \textbf{PPL} & \textbf{Acc} & \textbf{PPL} & \textbf{Acc} \\
\midrule
Original  & 10.86 & 0.52 & 6.78 & 0.56 & 4.10 & 0.61 \\
\cmidrule(lr){1-7}
SVD       & 66275 & 0.03 & 18644 & 0.05 & 946.31 & 0.33 \\
FWSVD     & 14559 & 0.06 & 2758  & 0.09 & 15.98 & 0.42 \\
ASVD      & 82.00 & 0.32 & 16.23 & 0.33 & 6.74 & 0.44 \\
SVDLLM    & 16.04 & 0.41 & 8.41 & 0.51 & 6.61 & 0.54  \\
\cmidrule(lr){1-7}
ZS-SVD    & 11.40 & 0.51 & 8.08 & 0.53 & 4.83 & 0.59 \\
\bottomrule
\end{tabular}
\vspace{-.1in}
\end{table}


\noindent\textbf{Ablation: Does correction stay low-rank?}
To answer this question and motivated by evidence that gradients near pretrained solutions are low-rank, we compare the spectra of the truncated weights and their gradients at the truncated point. We first truncate the model to a given pruning ratio, yielding $\{W'_\ell\}$. We then compute the calibration loss $\mathcal{L}$ on a mini-batch of 4 sequences (length 2048) from the same calibration set and run a single backward pass to obtain per-module gradients averaged over the $4\times 2048$ tokens,
$G_\ell=\nabla_{W_\ell}\mathcal{L}(W'_\ell)$. For each target matrix, we compute the singular values of $W'_\ell$ and $G_\ell$ and define the effective rank at energy threshold $\tau=0.95$ as
\begin{equation}
k_\tau(A) \;=\; \min\left\{k:\; \frac{\sum_{i=1}^{k}\sigma_i(A)^2}{\sum_{j}\sigma_j(A)^2}\ge \tau \right\},
\end{equation}
where $\{\sigma_i(A)\}$ are sorted in decreasing order. We report the effective-rank of weights and gradients and their ratio,
$k_\tau(G_\ell)/k_\tau(W'_\ell)$, for Layers~1,~16,~and 32 of LLaMA-2-7B at $20\%$ pruning in Fig. ~\ref{fig:rank_ratio}. The gradients exhibit a small effective rank, supporting our claim that the correction step can be re-truncated with small projection error.


\begin{table}[t!]
\centering
\small
\setlength{\tabcolsep}{1.5pt}
\renewcommand{\arraystretch}{1.05}
\caption{Ablation of global $\sigma$ selection strategies on LLaMA-7B. We report WikiText-2 PPL at 0.4 and 0.6 retention rates.\vspace{-.1in}}
\label{tab:strategy_summary}
\begin{tabular}{l c r r}
\toprule
\textbf{Strategy} & \shortstack{\textbf{Per-$W$}\\\textbf{$\sigma$ sorted}} &
\shortstack{\textbf{Ratio}\\\textbf{0.4}} & \shortstack{\textbf{Ratio}\\\textbf{0.6}} \\

\midrule
Most negative $\Delta\mathcal{L}$   & $\times$      & 160594 & 373585 \\
Magnitude of $\Delta\mathcal{L}$                  & $\times$      & 341.3    & 88.7     \\
\cmidrule(lr){1-4}
Most negative $\Delta\mathcal{L}$                         & $\checkmark$  & 182452 & 369350 \\
Magnitude of $\Delta\mathcal{L}$  & $\checkmark$ & 51.8 & 12.0 \\
Magnitude of $\sigma$                            & $\checkmark$      & 803599 & 32750  \\
Zero-sum $\Delta\mathcal{L}$ (ZS-SVD)                              & $\checkmark$  & 45.2       & 11.4       \\
\bottomrule
\end{tabular}
\vspace{-.1in}
\end{table}

\begin{table}[t!]
\caption{Throughput and memory for LLaMA-2-7B under different compression ratios on Slow GPU and Regular GPU.\vspace{-.1in}}
\label{tab:throughput_mem_both}
\centering
\scriptsize
\setlength{\tabcolsep}{3.5pt}
\renewcommand{\arraystretch}{1.02}
\begin{tabular}{c l c c c | c c c}
\toprule
\multirow{2}{*}{\textbf{Comp.}} & \multirow{2}{*}{\textbf{Method}} &
\multicolumn{3}{c|}{\textbf{Titan Xp}} & \multicolumn{3}{c}{\textbf{RTX A5000}} \\
\cmidrule(lr){3-5}\cmidrule(lr){6-8}
& &
\shortstack{\textbf{Thro.-}\\\textbf{put}} &
\shortstack{\textbf{Peak}\\\textbf{Mem.}} &
\shortstack{\textbf{Act}\\\textbf{Mem.}} &
\shortstack{\textbf{Thro.-}\\\textbf{put}} &
\shortstack{\textbf{Peak}\\\textbf{Mem.}} &
\shortstack{\textbf{Act}\\\textbf{Mem.}} \\
\midrule
\multirow{1}{*}{0\%}
& {\fontsize{7.}{9}\selectfont Original}  & 15.57 & 2.15 & 0.02 & 130.03 & 13.56 & 1.00  \\
\cmidrule(lr){1-8}
\multirow{3}{*}{40\%}
& {\fontsize{7.}{9}\selectfont SVDLLM}  & 34.54 & 8.89 & 1.02 & 675.24 & 15.86 & 8.00 \\
& {\fontsize{7.}{9}\selectfont DobiSVD} & 32.63 & 10.07 & 1.02 & 692.49 & 17.07 & 8.02 \\
& {\fontsize{7.}{9}\selectfont ZS-SVD}    & 35.21 & 8.81 & 1.02 & 731.68 & 15.78 & 8.00 \\
\cmidrule(lr){1-8}
\multirow{3}{*}{60\%}
& {\fontsize{7.}{9}\selectfont SVDLLM}  & 41.25 & 6.48 & 1.02 & 752.95 & 13.49 & 8.00 \\
& {\fontsize{7.}{9}\selectfont DobiSVD} & 36.30 & 8.27 & 1.02 & 722.35 & 15.26 & 8.02 \\
& {\fontsize{7.}{9}\selectfont ZS-SVD}    & 40.66 & 6.39 & 1.02 & 762.34 & 13.37 & 8.00 \\
\bottomrule
\end{tabular}
\vspace{-.25in}
\end{table}

\noindent\textbf{Ablation: Inference efficiency on GPUs.}
We benchmark LLaMA-2-7B inference on an RTX~A5000 (25\,GB) and a TITAN~Xp (12\,GB), comparing ZS-SVD against SVD-LLM, DobiSVD, and the uncompressed baseline (Table~\ref{tab:throughput_mem_both}). On the RTX~A5000, ZS-SVD delivers the highest throughput among compressed methods and provides large speedups over the uncompressed baseline (5.63$\times$ at 40\% compression and 5.86$\times$ at 60\%), while maintaining similar peak and activation memory to competing SVD approaches. For the uncompressed baseline in the low-VRAM regime, we reduce the sequence length to fit the full model, which lowers the reported activation and peak memory. On the memory-limited TITAN~Xp, where the baseline requires CPU--GPU weight offloading, ZS-SVD improves throughput by 2.26$\times$ (40\%) and 2.61$\times$ (60\%) relative to the offloaded baseline, and attains lower peak memory than DobiSVD. Overall, these results show that ZS-SVD remains effective across both compute-friendly and memory-constrained regimes.

\begin{table}[t]
\caption{Truncation time analysis on LLaMA-7B, with PPL evaluated on WikiText2.\vspace{-.1in}}
\label{tab:time_analysis}
\centering
\begin{small}
\begin{sc}
\setlength{\tabcolsep}{3.0pt}
\renewcommand{\arraystretch}{1.05}
\begin{tabular}{l c c c}
\toprule
& \textbf{SVDLLM} & \textbf{DobiSVD} & \textbf{ZS-SVD} \\
\midrule
\textbf{Time} ($\downarrow$) & 7.9\,min & 19.25\,hr & 15.9\,min \\
\textbf{PPL} ($\downarrow$)  & 53.74    & 46.18     & 45.17 \\
\bottomrule
\end{tabular}
\end{sc}
\end{small}
\vskip -0.1in
\vspace{-.1in}
\end{table}


\noindent\textbf{Ablation: Different $\sigma$ pruning strategies.}
Table~\ref{tab:strategy_summary} compares global selection rules for removing singular components under a fixed budget. ``Per-$W$ $\sigma$ sorted'' indicates whether each matrix enforces spectral order (the next candidate is its smallest remaining $\sigma$) or allows arbitrary removal. We evaluate: (i) \emph{Most negative $\Delta\mathcal{L}$}, which greedily drives the cumulative predicted loss change negative; (ii) \emph{Smallest $|\Delta\mathcal{L}|$}; (iii) \emph{Smallest $\sigma$} (ignoring loss); and (iv) \emph{Zero-sum $\Delta\mathcal{L}$} (ZS-SVD), which alternates positive/negative $\Delta\mathcal{L}$ to keep the running sum near zero while respecting per-$W$ ordering. ZS-SVD performs best at both ratios, suggesting that combining per-matrix spectral order with signed loss sensitivity is critical. Results are on LLaMA-7B with WikiText-2 perplexity. We see that the proposed zero-sum strategy outperforms others with a large margin.


\noindent\textbf{Truncation time analysis.} Table~\ref{tab:time_analysis} reports end-to-end truncation time and WikiText-2 perplexity for SVD-LLM, Dobi-SVD, and ZS-SVD under the same setting. ZS-SVD is significantly faster than Dobi-SVD, which is dominated by an expensive rank-selection optimization, and ZS-SVD also achieves better perplexity. ZS-SVD takes longer than SVD-LLM because it computes first-order loss-change estimates to guide global selection, but this added cost results in consistently better perplexity than both baselines. All timings are measured on an NVIDIA RTX PRO 6000 Blackwell Max-Q GPU Workstation with 96 GB VRAM.

\vspace{-.1in}
\section{Conclusion}

We introduced \emph{Zero-Sum SVD (ZS-SVD)}, a post-training, SVD-based compression framework for LLMs that performs \emph{global} singular-component selection under a shared budget by combining truncation-aware whitening with \emph{signed} first-order calibration-loss estimates in whitened coordinates. By scoring each singular value by its directional loss sensitivity and applying a greedy \emph{zero-sum} selection rule that keeps the cumulative predicted loss drift near zero, ZS-SVD automatically induces heterogeneous ranks across layers without solving an expensive rank-allocation optimization. We also propose an optional truncate--correct--re-truncate step: a single projected-gradient update that briefly leaves the low-rank manifold to recover calibration loss, then re-truncates to the target rank; empirically, gradients are low effective rank, so the re-projection error is small. Across multiple LLM families and scales, ZS-SVD consistently improves perplexity and zero-shot accuracy over strong SVD baselines (e.g., ASVD, SVD-LLM, Dobi-SVD) and structured pruning methods across a range of compression ratios, while also delivering meaningful inference speedups and favorable truncation time compared to optimization-heavy alternatives.

\section*{Impact Statement}
This paper presents a post-training compression method for large language models based on low-rank SVD truncation guided by calibration-loss sensitivity. The primary intended impact is to reduce the memory footprint and inference cost of deploying neural language models, which can lower energy consumption and improve accessibility on resource-constrained hardware. 

Like other advances in model efficiency, our method could also enable broader deployment of capable language models, including in settings where they may be misused (e.g., for generating misleading or harmful content) or where compressed models inherit and potentially amplify biases present in the underlying pretrained models and calibration data. Our approach does not introduce new data sources, does not change the model’s training objective, and does not provide mechanisms for content filtering or bias mitigation; responsible deployment therefore remains essential. We encourage practitioners to evaluate compressed models for reliability, bias, and safety on their target domains and to follow established best practices for monitoring and governance when deploying language technologies.

\bibliography{example_paper}
\bibliographystyle{icml2026}

\clearpage
\appendix

\section{Proofs}

\begin{proof}[Proof of Theorem~\ref{thm:whitened_loss}]
Using the identity $\|M\|_F^2=\mathrm{tr}(MM^\top)$, we expand
\begin{align}
\|WX - W'_kX\|_F^2
&= \mathrm{tr}\!\Big((W-W'_k)XX^\top(W-W'_k)^\top\Big).
\end{align}
Under the assumption $SS^\top = XX^\top$, this becomes
\begin{align}
\|WX - W'_kX\|_F^2
&= \mathrm{tr}\!\Big((W-W'_k)SS^\top(W-W'_k)^\top\Big) \\
&= \|(W-W'_k)S\|_F^2.
\end{align}
By definition, $A = WS$ and $W'_k = A_k S^{-1}$, hence
\begin{align}
(W-W'_k)S
= WS - A_k S^{-1}S
= A - A_k.
\end{align}
Therefore,
\begin{equation}
\|WX - W'_kX\|_F^2 = \|A-A_k\|_F^2.
\end{equation}
Let $A=U\Sigma V^\top$ be the SVD of $A$ and let $A_k = U_k\Sigma_k V_k^\top$ be its rank-$k$ truncation.
Then
\begin{align}
A - A_k
&= \sum_{i>k}\sigma_i\,u_i v_i^\top,
\end{align}
and since $\{u_i v_i^\top\}$ are orthonormal in the Frobenius inner product,
\begin{align}
\|A-A_k\|_F^2
= \sum_{i>k}\sigma_i^2.
\end{align}
Combining the displays yields Eq.~\eqref{eq:multi_sv_loss}.
\end{proof}

\begin{proof}[Proof of Corollary~\ref{cor:optimal_trunc}]
Assume the conditions of Theorem~\ref{thm:whitened_loss}, and additionally that $S$ is invertible (as in practice,
this is ensured by using $C+\lambda I$ with $\lambda>0$). For any rank-$k$ matrix $W'$, define $B \triangleq W'S$.
Then $\mathrm{rank}(B)\le k$ and
\begin{equation}
\|WX - W'X\|_F^2 = \|(W-W')S\|_F^2 = \|A - B\|_F^2.
\end{equation}
Conversely, for any rank-$k$ matrix $B$, setting $W' = BS^{-1}$ yields $\mathrm{rank}(W')\le k$ and
$\|A-B\|_F = \|(W-W')S\|_F$. Hence minimizing $\|WX-W'X\|_F$ over rank-$k$ matrices $W'$ is equivalent to
minimizing $\|A-B\|_F$ over rank-$k$ matrices $B$.

By the Eckart--Young--Mirsky theorem, the rank-$k$ truncated SVD $A_k$ uniquely minimizes $\|A-B\|_F$ over
all rank-$k$ matrices $B$, and the minimum value is $\|A-A_k\|_F^2=\sum_{i>k}\sigma_i^2$. Taking $B=A_k$
and mapping back via $W'_k=A_kS^{-1}$ gives the stated optimality.
\end{proof}

\section{ZS-SVD algorithm}
The pseudocode for initialization of the proposed zero-sum strategy is given in Algorithm~\ref{alg:zerosum_init_app}, and the pseudocode for the selection/reconstruction procedure is given in Algorithm~\ref{alg:zerosum_select_app}.
\paragraph{Low-rank thresholding and budget accounting.}
A rank-$k$ factorization of an $m\times n$ weight stores $k(m+n)$ parameters, while the dense matrix stores $mn$.
Low-rank becomes storage-saving only when $k(m+n) \le mn$, i.e., $k_{\mathrm{thr}}=\frac{mn}{m+n}$. We incorporate this into the global budget as follows. While a layer's current rank $k_\ell$ remains above
$k_{\mathrm{thr},\ell}$, switching to low-rank would not reduce storage, so dropping additional singular
components does not contribute to the parameter-removal budget and we set its per-drop cost to $0$.
Once $k_\ell \le k_{\mathrm{thr},\ell}$, each further drop reduces the stored factors by one column/row and
saves $(m_\ell+n_\ell)$ parameters, so we set $\mathrm{cost}(\ell)=m_\ell+n_\ell$ thereafter.
Finally, after selection terminates, if a layer's final rank satisfies $k_\ell > k_{\mathrm{thr},\ell}$,
we keep the original dense weight $W_\ell$ (no low-rank replacement) to avoid introducing truncation noise when factorization is not worthwhile; otherwise we store the truncated factors.

\begin{algorithm}[H]
  \caption{Initialization for global zero sum selection}
  \label{alg:zerosum_init_app}
  \begin{algorithmic}
    \STATE {\bfseries Input:} weights $\{W_\ell \in \mathbb{R}^{m_\ell \times n_\ell}\}_{\ell=1}^L$, activations $\{X_\ell\}_{\ell=1}^L$, retention rate $\rho \in (0,1]$
    \STATE {\bfseries Output:} tuples $\{(U_\ell,\Sigma_\ell,V_\ell,S_\ell,\pi_\ell,p_\ell,k_{\mathrm{thr},\ell})\}_{\ell=1}^L$, heaps $\mathcal{Q}_{+},\mathcal{Q}_{-}$, costs $\{\mathrm{cost}(\ell)\}_{\ell=1}^L$, budget $B \leftarrow (1-\rho)\sum_{\ell=1}^L m_\ell n_\ell$
    \STATE Initialize empty min heaps $\mathcal{Q}_{+} \leftarrow \emptyset$, $\mathcal{Q}_{-} \leftarrow \emptyset$
    \FOR{$\ell = 1$ {\bfseries to} $L$}
      \STATE Set $k_{\mathrm{thr},\ell} \leftarrow \left\lceil \frac{m_\ell n_\ell}{m_\ell+n_\ell}\right\rceil$
      \STATE Initialize pointer $p_\ell \leftarrow 1$
      \STATE Set $\mathrm{cost}(\ell) \leftarrow 0$ \hfill // no savings until $k \le k_{\mathrm{thr},\ell}$
      \STATE Estimate $S_\ell$ from $X_\ell$ so that $S_\ell S_\ell^\top \approx X_\ell X_\ell^\top$
      \STATE Form $A_\ell = W_\ell S_\ell$ and compute SVD $A_\ell = U_\ell \Sigma_\ell V_\ell^\top$
      \STATE Compute whitened gradient $H_\ell \leftarrow (\nabla_{W_\ell}\mathcal{L}) S_\ell^{-\top}$
      \STATE Compute $g_{\sigma,\ell} \leftarrow \mathrm{diag}(U_\ell^\top H_\ell V_\ell)$
      \STATE For each $i$: set $\Delta \mathcal{L}_{\ell,i} \leftarrow -\,\sigma_{\ell,i}\, g_{\sigma,\ell,i}$
      \STATE Sort indices $\pi_\ell$ so that $\sigma_{\ell,\pi_\ell(1)} \le \cdots \le \sigma_{\ell,\pi_\ell(r_\ell)}$
      \STATE $i \leftarrow \pi_\ell(p_\ell)$, \;$\Delta \leftarrow \Delta \mathcal{L}_{\ell,i}$
      \STATE Push $(\ell,i,\Delta)$ into $\mathcal{Q}_{+}$ if $\Delta \ge 0$, else into $\mathcal{Q}_{-}$
    \ENDFOR
  \end{algorithmic}
\end{algorithm}


\subsection{Ablation: correction variants}
\label{sec:correction_ablation_app}

\begin{table}[H]
\caption{Ablation of correction variants on LLaMA-7B, WikiText2.}
\label{tab:correction_ablation_app}
\centering
\begin{small}
\begin{sc}
\setlength{\tabcolsep}{2.0pt}
\renewcommand{\arraystretch}{1.02}
\begin{tabular}{c c c c  c c c  c  c}
\toprule
{\fontsize{7.5}{9}\selectfont } &
\multicolumn{3}{c}{{\fontsize{7.5}{9}\selectfont \textbf{$\alpha$-blend}}} &
\multicolumn{3}{c}{{\fontsize{7.5}{9}\selectfont \textbf{GD Corr.}}} &
{\fontsize{7.5}{9}\selectfont \textbf{Proj.}} &
{\fontsize{7.5}{9}\selectfont \textbf{Proj}} \\
\cmidrule(lr){2-4}\cmidrule(lr){5-7}
{\fontsize{7.5}{9}\selectfont } &
{\fontsize{7.5}{9}\selectfont 0.25} &
{\fontsize{7.5}{9}\selectfont 0.50} &
{\fontsize{7.5}{9}\selectfont 0.75} &
{\fontsize{7.5}{9}\selectfont $\eta{=}10^{-2}$} &
{\fontsize{7.5}{9}\selectfont $\eta{=}10^{-3}$} &
{\fontsize{7.5}{9}\selectfont $\eta{=}10^{-4}$} &
{\fontsize{7.5}{9}\selectfont \textbf{$\Delta$}} &
{\fontsize{7.5}{9}\selectfont \textbf{Grad}} \\
\midrule
{\fontsize{7.5}{9}\selectfont \textbf{PPL}$\downarrow$} & 42.2 & 42.8 & 53.8 & 40.4 & 38.3 & 47.7 & 48.8 & 26.9 \\
\bottomrule
\end{tabular}
\end{sc}
\end{small}
\vskip -0.1in
\end{table}

Table~\ref{tab:correction_ablation_app} ablates several correction strategies applied after the first truncation stage. Let $W\in\mathbb{R}^{m\times n}$ denote the original (uncompressed) weight, and let $W'_k$ be its rank-$k$ truncation after the first stage. We evaluate each strategy by applying a single correction update to $W'_k$ using the same calibration loss $\mathcal{L}$, followed by re-truncation back to rank $k$.

\begin{algorithm}[H]
  \caption{Global zero sum selection and reconstruction}
  \label{alg:zerosum_select_app}
  \begin{algorithmic}
    \STATE {\bfseries Input:} $\{(U_\ell,\Sigma_\ell,V_\ell,S_\ell,\pi_\ell,p_\ell,k_{\mathrm{thr},\ell})\}_{\ell=1}^L$, heaps $\mathcal{Q}_{+},\mathcal{Q}_{-}$, budget $B$, costs $\{\mathrm{cost}(\ell)\}_{\ell=1}^L$
    \STATE {\bfseries Output:} compressed weights $\{W'_\ell\}_{\ell=1}^L$ (dense or factored)
    \STATE Initialize removed budget $b \leftarrow 0$ and running sum $s \leftarrow 0$
    \WHILE{$b < B$ {\bfseries and} $(\mathcal{Q}_{+}\neq\emptyset$ {\bfseries or} $\mathcal{Q}_{-}\neq\emptyset)$}
      \STATE Prefer $\mathcal{Q}_{+}$ if $s \le 0$, otherwise prefer $\mathcal{Q}_{-}$
      \IF{$s \le 0$}
        \STATE Pop $(\ell,i,\Delta)$ from $\mathcal{Q}_{+}$, else pop from $\mathcal{Q}_{-}$ if $\mathcal{Q}_{+}$ is empty
      \ELSE
        \STATE Pop $(\ell,i,\Delta)$ from $\mathcal{Q}_{-}$, else pop from $\mathcal{Q}_{+}$ if $\mathcal{Q}_{-}$ is empty
      \ENDIF
      \STATE Mark $(\ell,i)$ as removed and update $s \leftarrow s + \Delta$
      \STATE Advance $p_\ell \leftarrow p_\ell + 1$ \hfill // one more component removed from layer $\ell$
      \STATE Let $k_\ell \leftarrow \min(m_\ell,n_\ell) - (p_\ell-1)$ \hfill // remaining components
      \IF{$k_\ell \le k_{\mathrm{thr},\ell}$}
        \STATE Set $\mathrm{cost}(\ell) \leftarrow m_\ell + n_\ell$ \hfill // savings per further removal
      \ELSE
        \STATE Set $\mathrm{cost}(\ell) \leftarrow 0$
      \ENDIF
      \STATE Update budget $b \leftarrow b + \mathrm{cost}(\ell)$
      \IF{$p_\ell \le \min(m_\ell,n_\ell)$}
        \STATE $j \leftarrow \pi_\ell(p_\ell)$, \;$\Delta' \leftarrow \Delta \mathcal{L}_{\ell,j}$
        \STATE Push $(\ell,j,\Delta')$ into $\mathcal{Q}_{+}$ if $\Delta' \ge 0$, else into $\mathcal{Q}_{-}$
      \ENDIF
    \ENDWHILE
    \FOR{$\ell = 1$ {\bfseries to} $L$}
      \STATE Let $k_\ell \leftarrow \min(m_\ell,n_\ell) - (p_\ell-1)$
      \IF{$k_\ell > k_{\mathrm{thr},\ell}$}
        \STATE Set $W'_\ell \leftarrow W_\ell$ \hfill // keep dense (skip truncation)
      \ELSE
        \STATE Let $\Sigma'_\ell$ be $\Sigma_\ell$ with removed components zeroed
        \STATE Set $W'_{u,\ell} \leftarrow U_\ell (\Sigma'_\ell)^{1/2}$
        \STATE Set $W'_{v,\ell} \leftarrow (\Sigma'_\ell)^{1/2} V_\ell^\top S_\ell^{-1}$
        \STATE Set $W'_\ell \leftarrow W'_{u,\ell} W'_{v,\ell}$
      \ENDIF
    \ENDFOR
  \end{algorithmic}
\end{algorithm}

\paragraph{$\alpha$-blend.}
We linearly interpolate between the truncated weights and the original weights,
\begin{equation}
W_{\alpha} \;=\; (1-\alpha)W'_k + \alpha W,
\end{equation}
and then re-truncate $W_{\alpha}$ back to rank $k$. We sweep $\alpha \in \{0.25,0.50,0.75\}$.

\paragraph{GD correction.}
We apply a single gradient descent step at the truncated point,
\begin{equation}
W^{+} \;=\; W'_k - \eta\, g,
\qquad
g \;\triangleq\; \nabla_W \mathcal{L}(W'_k),
\end{equation}
and then re-truncate $W^{+}$ to rank $k$. This update does not use information from the teacher weights $W$.

\paragraph{Projection-based corrections.}
Define the truncation residual (teacher residual)
\begin{equation}
\Delta W \;\triangleq\; W - W'_k.
\end{equation}
We compare two complementary projection directions. \emph{Proj.\ $\Delta$} projects the gradient onto the residual direction,
\begin{equation}
g_{\Delta} \;\triangleq\; \frac{\langle g,\Delta W\rangle}{\langle \Delta W,\Delta W\rangle}\,\Delta W,
\qquad
W^{+} \;=\; W'_k + g_{\Delta},
\end{equation}
followed by re-truncation to rank $k$. In contrast, \emph{Proj.\ Grad} (ours) uses the one-step correction derived in Sec.~4.3, projecting the residual onto the gradient direction,
\begin{equation}
\Delta W' \;\triangleq\; \frac{\langle g,\Delta W\rangle}{\langle g,g\rangle}\, g,
\qquad
W^{+} \;=\; W'_k + \Delta W',
\end{equation}
followed by re-truncation to rank $k$.

\begin{figure*}[t]
    \centering
    \includegraphics[width=\linewidth]{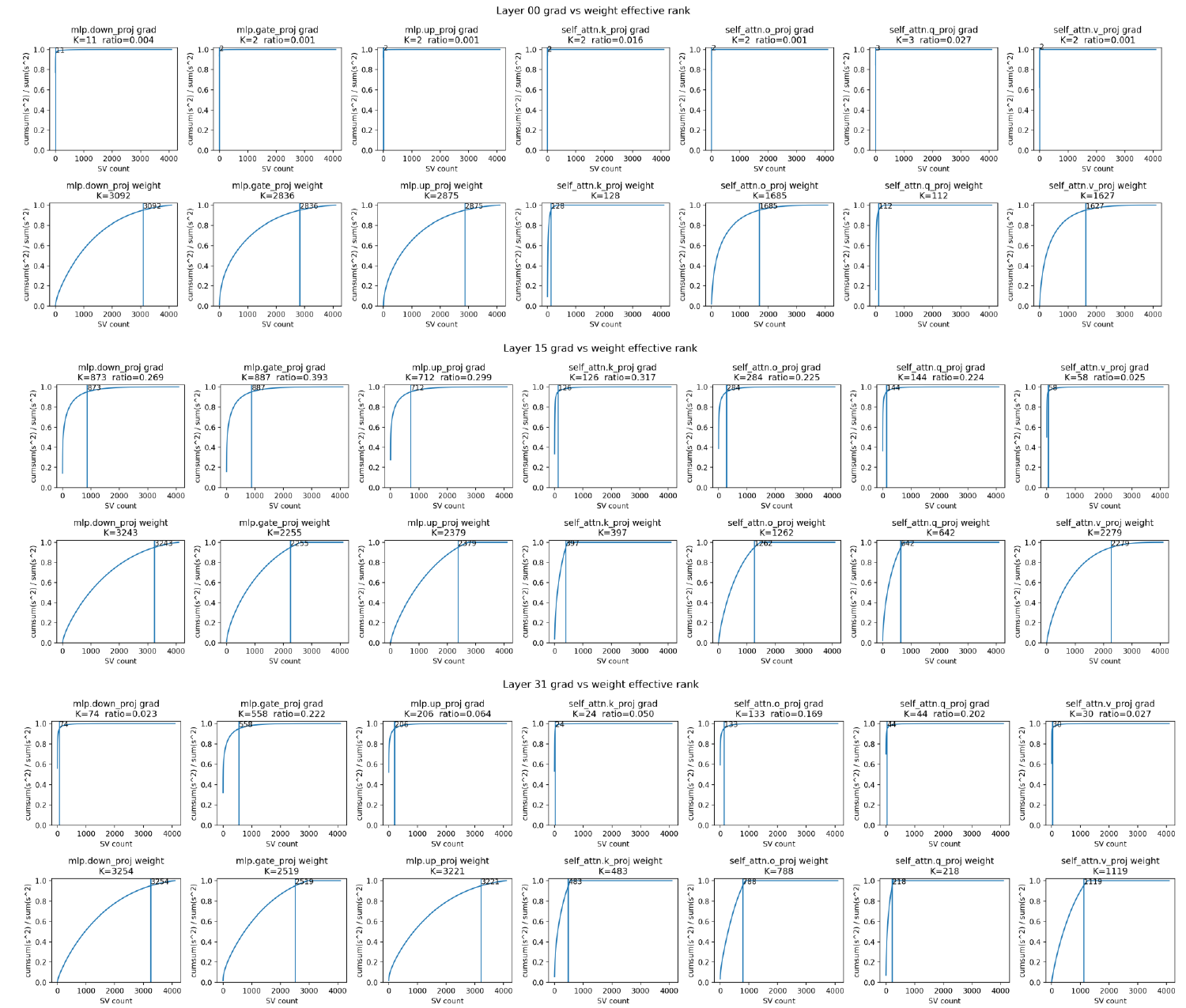}
    \caption{Gradient vs.\ weight effective rank at Layers 1, 16, and 32 of LLaMA-2-7B under 20\% pruning. For each module, we compute the singular spectrum of the truncated weight $W'$ and the gradient $G=\nabla_W\mathcal{L}(W')$, and define the effective rank using a spectral energy cutoff of 0.95 (vertical dashed line), i.e., the smallest $k$ such that $\sum_{i\le k}\sigma_i^2/\sum_i\sigma_i^2 \ge 0.95$.}
    \label{fig:layer_rank_app}
\end{figure*}

\end{document}